\newcommand\localmodel[1]{\hypospace^{(#1)}}
\newcommand\defeq{:=}
\newcommand{\vx}[0]{{\bf x}}
\newcommand{\vv}[0]{{\bf v}}
\newcommand{\vu}[0]{{\bf u}}
\newcommand{\mX}[0]{{\bf X}}
\newcommand{\mL}[0]{{\bf L}}
\newcommand{\vw}[0]{{\bf w}}
\newcommand{\mP}{\mathbf{P}}
\newcommand{\mI}{\mathbf{I}}
\newcommand{\vy}[0]{{\bf y}}
\newcommand{\vz}[0]{{\bf z}}
\newcommand{\vc}[0]{{\bf c}}
\newcommand{\normgeneric}[2]{\left\Vert  {#1} \right\Vert_{#2}}
\newcommand{\bmx}[0]{\begin{bmatrix}}
\newcommand{\emx}[0]{\end{bmatrix}}
\newcommand\truelabel{y}
\newcommand\labelvec{\vy}
\newcommand\featurevec{\vx}
\newcommand{\hypospace}{\mathcal{H}}
\newcommand{\eigval}[1]{\lambda_{#1}}
\newcommand{\regparam}{\alpha}
\DeclareMathOperator*{\argmin}{argmin}
\newcommand{\cluster}{\mathcal{C}}
\newcommand{\featuremtx}{\mX}
\newcommand{\weights}{\vw}
\newcommand{\locallossfunc}[2]{L_{#1}\left(#2 \right)}
\newcommand{\localdataset}[1]{\mathcal{D}^{(#1)}}
\newcommand{\edges}{\mathcal{E}}
\newcommand{\edgeweight}{A}
\newcommand{\graph}{\mathcal{G}}
\newcommand{\nodes}{\mathcal{V}}
\newcommand{\LapMat}[1]{\mL^{(#1)}}
\newcommand{\nodeidx}{i}
\newcommand{\nrnodes}{n}
\newcommand{\edge}[2]{\{#1,#2\}}
\newcommand{\localsamplesize}[1]{m_{#1}}
\newcommand{\dimlocalmodel}{d}
\newcommand{\pair}[2]{\left( #1,#2 \right)}
\newcommand{\localparams}[1]{\mathbf{w}^{(#1)}}
\newcommand{\estlocalparams}[1]{\widehat{\mathbf{w}}^{(#1)}}
\def\BibTeX{{\rm B\kern-.05em{\sc i\kern-.025em b}\kern-.08em
    T\kern-.1667em\lower.7ex\hbox{E}\kern-.125emX}}
\begin{document}

\title{Analysis of Total Variation Minimization for Clustered Federated Learning\\
\thanks{This work has been supported by the Research Council of Finland under funding decision nr. $349966$ and $331197$.}
}
\newtheorem{theorem}{Theorem}
\newtheorem{assumption}{Assumption}

\tikzstyle{ncyan}=[circle, draw=cyan!70, thin, fill=white, scale=0.8, font=\fontsize{11}{0}\selectfont]
\tikzstyle{ngreen}=[circle,  draw=green!70, thin, fill=white, scale=0.8, font=\fontsize{11}{0}\selectfont]
\tikzstyle{nred}=[circle, draw=red!70, thin, fill=white, scale=0.8, font=\fontsize{11}{0}\selectfont]
\tikzstyle{ngray}=[circle, draw=gray!70, thin, fill=white, scale=0.55, font=\fontsize{14}{0}\selectfont]
\tikzstyle{nyellow}=[circle, draw=yellow!70, thin, fill=white, scale=0.55, font=\fontsize{14}{0}\selectfont]
\tikzstyle{norange}=[circle,  draw=orange!70, thin, fill=white, scale=0.55, font=\fontsize{10}{0}\selectfont]
\tikzstyle{npurple}=[circle,draw=purple!70, thin, fill=white, scale=0.55, font=\fontsize{10}{0}\selectfont]
\tikzstyle{nblue}=[circle, draw=blue!70, thin, fill=white, scale=0.55, font=\fontsize{10}{0}\selectfont]
\tikzstyle{nteal}=[circle,draw=teal!70, thin, fill=white, scale=0.55, font=\fontsize{10}{0}\selectfont]
\tikzstyle{nviolet}=[circle, draw=violet!70, thin, fill=white, scale=0.55, font=\fontsize{10}{0}\selectfont]
\tikzstyle{qgre}=[rectangle, draw, thin,fill=green!20, scale=0.8]

\author{\IEEEauthorblockN{Alexander Jung}
\IEEEauthorblockA{\textit{Department of Computer Science} \\
\textit{Aalto University}\\
Espoo, Finland \\
alex.jung@aalto.fi, ORCID}
}

\maketitle

\begin{abstract}
A key challenge in federated learning applications is the statistical heterogeneity of local datasets. 
Clustered federated learning addresses this challenge by identifying clusters of local datasets that 
are approximately homogeneous. One recent approach to clustered federated learning is generalized  
total variation minimization (GTVMin). This approach builds on a given similarity graph with weighted 
edges providing ``pairwise hints'' about the cluster assignments. While the literature offers a good 
selection of graph construction methods, little is know about the resulting clustering properties of 
GTVMin. We study conditions on the similarity graph to allow GTVMin to recover the inherent cluster 
structure of local datasets. In particular, under a widely applicable clustering assumption, we derive 
an upper bound for the deviation between GTVMin solutions and their cluster-wise averages. This 
bound provides valuable insights into the effectiveness and robustness of GTVMin in addressing 
statistical heterogeneity within federated learning environments.
\end{abstract}

\begin{IEEEkeywords}
machine learning, federated learning, distributed algorithms, convex optimization, complex networks
\end{IEEEkeywords}

\section{Introduction}
Federated Learning (FL) is an umbrella term for distributed optimization techniques to train 
machine learning (ML) models from decentralized collections of local datasets \cite{pmlr-v54-mcmahan17a,LiTalwalkar2020,Cheng2020,AgarwalcpSGD2018,Smith2017}. 
The most basic variant of FL trains a single global model in a distributed fashion from 
local datasets. However, some FL applications require to train separate (personalized) 
models for each local dataset \cite{Lengerich2018,LinUCB2010,Guk2019}. 

To train high-dimensional personalized models from (relatively) small local datasets, 
we can exploit the information provided by a similarity graph. The nodes of the similarity graph 
carry local datasets and corresponding local models. Undirected weighted edges in the similarity 
graph represent pairwise similarities between the statistical properties of local datasets. One natural approach 
to exploit the information provided by a similarity graph is generalized total variation minimization (GTVMin) \cite{ClusteredFLTVMinTSP}. 
GTVMin couples the training of personalized models via penalizing the variation of the model parameters 
across the edges of the similarity graph. 

We obtain different instances of GTVMin by using different measures of the variation of 
model parameters across the edges of the similarity graph. Two well-known special cases of GTVMin are 
``MOCHA'' \cite{Smith2017} and network Lasso \cite{NetworkLasso}. Our own recent work 
studies a GTVMin variant that can handle networks of different (including non-parametric) 
personalized models \cite{JuEusipco2023}. 

GTVMin is computationally attractive as it can be solved with scalable distributed optimization 
methods such as stochastic gradient descent or primal-dual methods \cite{Nedic2013,Rasch:2020tx}. 
Moreover, using a suitable choice for the similarity graph, GTVMin is able to capture the 
intrinsic cluster structure of local datasets \cite{ClusteredFLTVMinTSP}. 

{\bf Contribution.} We analyze the cluster structure of GTVMin instances that use the 
squared Euclidean norm to measure the variation of personalized model parameters. 
In particular, we provide an upper bound on the cluster-wise variability of model parameters 
learnt by GTVMin. This analysis complements our own recent work on the cluster structure 
of the solutions to GTVMin when using a norm to measure the variation of model parameters \cite{ClusteredFLTVMinTSP}. 

{\bf Outline.} Section \ref{sec_problem_formulation} formulates the problem of clustered 
FL (CFL) for distributed collections of data via generalized total variation minimization (GTVMin) 
over a similarity graph. Section \ref{sec_main_result} contains our main result which is an upper 
bound on the variation of learnt model parameters across nodes in the same cluster. 

\section{Problem Formulation}
\label{sec_problem_formulation} 

In what follows, we develop a precise mathematical formulation of clustered FL (CFL) 
over networks. Section \ref{sec_clustered_federated_learning} formulates the problem 
of learning personalized models for data generators that form clusters. Section \ref{sec_sim_graph} 
defines the concept of a similarity graph that provides information about the pairwise 
similarities between data generators. Section \ref{sec_gtvmin} then uses the similarity 
graph to formulate GTVMin. Our main result is an upper bound on the cluster-wise variability 
of local model parameters delivered by GTVMin (see Section \ref{sec_main_result}). 

\subsection{Clustered Federated Learning} 
\label{sec_clustered_federated_learning} 

We consider a collection of $\nrnodes$ data generators (or ``users'') that we index by $\nodeidx = 1,\ldots,\nrnodes$. 
Each data generator $\nodeidx$ delivers a local (or personal) dataset $\localdataset{\nodeidx}$. 
The goal is to train a personalized model $\localmodel{\nodeidx}$, with model parameters $\localparams{\nodeidx}$, 
for each $\nodeidx$. The usefulness of a specific choice $\localparams{\nodeidx}$ for the 
model parameters is measured by a non-negative local loss function $\locallossfunc{\nodeidx}{\localparams{\nodeidx}} \geq0$. 

The common idea of CFL methods is to pool (or cluster) local datasets with similar 
statistical properties. We can then train a personalized model using the pooled local 
datasets of the corresponding cluster. CFL is successful if the data generators 
actually form clusters, within which they are (approximately) homogeneous statistically. 
We make this requirement precise in the following clustering assumption. 
\begin{assumption} 
	\label{asspt_clustering}
Each data generator belongs to some cluster $\mathcal{C} \subseteq \{1,\ldots,\nrnodes\}$. 
There is a cluster-specific choice $\overline{\weights}^{(\mathcal{C})}$ for the local parameters 
for all $\nodeidx \in \mathcal{C}$ such that 
\begin{equation} 
	\label{equ_def_cluster_wise_opt}
	\sum_{\nodeidx \in \mathcal{C}} \locallossfunc{\nodeidx}{\overline{\weights}^{(\mathcal{C})}} \leq
	\varepsilon^{(\mathcal{C})}.
\end{equation} 
\end{assumption} 
Note that the Assumption \ref{asspt_clustering} might be valid for different choices of clusters.\footnote{In particular, there 
	might be two different clusters $\cluster_{1}, \cluster_{2}$ that both contain a specific node $\nodeidx \in \nodes$, each 
	satisfying Assumption  \ref{asspt_clustering} with (potentially) different parameters $	\varepsilon^{(\mathcal{C}_{1})},\varepsilon^{(\mathcal{C}_{2})}$.} 
Using larger clusters in Assumption \ref{asspt_clustering} requires a larger value $\varepsilon^{(\mathcal{C})}$ 
for \eqref{equ_def_cluster_wise_opt} to hold. Unless stated otherwise, we assume that the $\cluster$ consists 
of the nodes $\nodeidx=1,\ldots,|\cluster|$.  

{\bf Example.} It is instructive to consider Assumption \ref{asspt_clustering} for the special case 
of local linear regression. Here, generator $\nodeidx$ delivers $\localsamplesize{\nodeidx}$ data points 
$$ \pair{\featurevec^{(\nodeidx,1)}}{\truelabel^{(\nodeidx,1)}},\ldots,\pair{\featurevec^{(\localsamplesize{\nodeidx})}}{\truelabel^{(\localsamplesize{\nodeidx})}},$$
which we represent by the label vector $\labelvec^{(\nodeidx)}=\big( \truelabel^{(\nodeidx,1)},\ldots,\truelabel^{(\nodeidx,\localsamplesize{\nodeidx})} \big)^{T}$ 
and feature matrix $\mX^{(\nodeidx)} \defeq \big( \featurevec^{(\nodeidx,1)},\ldots,\featurevec^{(\nodeidx,\localsamplesize{\nodeidx})} \big)^{T}$. 
We assess local model parameters $\localparams{\nodeidx}$ via the local loss function $\locallossfunc{\nodeidx}{\localparams{\nodeidx}} = (1/\localsamplesize{\nodeidx}) \normgeneric{\vy^{(\nodeidx)} - \featuremtx^{(\nodeidx)} \localparams{\nodeidx} }{2}^{2}$. 
A sufficient condition for Assumption \ref{asspt_clustering} to hold with parameters $ \overline{\weights}^{(\mathcal{C})}, \varepsilon^{(\mathcal{C})}$ 
is that 
\begin{equation} 
\label{equ_def_probmodel_linreg_node_i}
\hspace*{-3mm}\labelvec^{(\nodeidx)}\!=\! \featuremtx^{(\nodeidx)}  \overline{\weights}^{(\mathcal{C})}\!+\!{\bm \varepsilon}^{(\nodeidx)},  \mbox{ for all } \nodeidx \in \mathcal{C}.
\end{equation} 
with noise terms ${\bm \varepsilon}^{(\nodeidx)}$ that are sufficiently small such that 
\begin{equation} 
\varepsilon^{(\mathcal{C})} \geq \sum_{\nodeidx \in \mathcal{C}} (1/\localsamplesize{\nodeidx}) 
\normgeneric{{\bm \varepsilon}^{(\nodeidx)}}{2}^{2}. \label{equ_clustering_error_local_lin_model}
\end{equation}


\subsection{Similarity Graph}
\label{sec_sim_graph} 

In general, we do not know to which cluster a given data generator $\nodeidx$ belongs to 
(see Assumption \ref{asspt_clustering}). However, we might still have some information 
about pair-wise similarities $\edgeweight_{\nodeidx,\nodeidx'}$ between any two 
data generators $\nodeidx,\nodeidx'$. We represent the pair-wise similarities between data 
generators by an undirected weighted ``similarity graph'' $\graph = \pair{\nodes}{\edges}$. 

The nodes $\nodes = \{1,\ldots,\nrnodes\}$ of this similarity graph $\graph$ are the data generators $\nodeidx=1,\ldots,\nrnodes$. 
An undirected edge $\{\nodeidx,\nodeidx'\}\!\in\!\edges$ between two different nodes (data generators) $\nodeidx,\nodeidx' \in \nodes$ indicates that they 
generate data with similar statistical properties. We quantify the extend of this similarity by a positive 
edge weight $\edgeweight_{\nodeidx,\nodeidx'}\!>\!0$. Figure \ref{fig_similarity_graph} depicts an 
example of a similarity graph that consists of three clusters. 
\begin{figure} 
	\begin{center}
		\begin{tikzpicture}[scale=11/5]
			\tikzstyle{every node}=[font=\small]
			\node[circle,fill=black] (C1_2) at (2,2.29) {};
			\node[left=1 cm of C1_2,circle,fill=black] (C1_1)  {};
			\node[below left =1cm and 1cm of C1_2,circle,fill=black] (C1_3)  {};
			\node[below =0.5cm of C1_2,circle,fill=black] (C1_4)  {};
			\node[circle,fill=black] (C3_3) at (4.3,2) {};
			\node[above left =0.4cm and 0.7cm of C3_3,circle,fill=black] (C3_2)  {};
			\node[below left =0.4 and 0.7cm of C3_3,circle,fill=black] (C3_4) {};
			\node[left =1.2cm of C3_3,circle,fill=black] (C3_1) {};
			\node[circle,fill=black] (C2_2) at (3.0,2.23) {};
			\node[below left =0.4cm and 0.4cm of C2_2,circle,fill=black] (C2_1)  {};
			\node[below right =0.4cm and 0.4cm of C2_2,circle,fill=black] (C2_3)  {};
			\node[above left = 0.2cm and 0.01cm of C1_2, font=\fontsize{8}{0}\selectfont,anchor=west] {$\localparams{\nodeidx}$}; 
			\node[below right = 0.5cm and 0.00cm of C2_1, font=\fontsize{8}{0}\selectfont,anchor=south] {$\localparams{\nodeidx'}$}; 
			
			\draw [line width=0.3mm,-] (C2_1)--(C1_2) node[draw=none,fill=none,font=\fontsize{8}{0}\selectfont,midway,above] {$\edgeweight_{\nodeidx,\nodeidx'}$};
			\draw [line width=0.6mm,-] (C1_2)--(C1_1);
			\draw [line width=0.4mm,-] (C1_2)--(C1_3);
			\draw [-] (C1_1)--(C1_3);
			\draw [-] (C1_3)--(C1_4);
			\draw [-] (C1_2)--(C1_4);
			\draw [line width=0.3mm,-] (C2_3)--(C3_1);
			\draw [line width=0.6mm,-] (C2_1)--(C2_2);
			\draw [line width=0.4mm,-] (C2_2)--(C2_3);
			\draw [line width=0.4mm,-] (C2_1)--(C2_3);
			\draw [-] (C3_1)--(C3_2);
			\draw [-] (C3_2)--(C3_3);
			\draw [line width=0.6mm,-] (C3_3)--(C3_4);
			\draw [-] (C3_2)--(C3_4);
			\draw [-] (C3_1)--(C3_4);
		\end{tikzpicture}
		\caption{\label{fig_similarity_graph} Example of a similarity graph whose nodes $\nodeidx \in \nodes$ 
			represent data generators and corresponding personalized models. Each personalized model is 
			parametrized by local model parameters $\localparams{\nodeidx}$. Two nodes $\nodeidx,\nodeidx'\in \nodes$ 
			are connected by an edge $\edge{\nodeidx}{\nodeidx'} \in \edges$ if the corresponding data generators 
			are statistically similar. The extend of similarity is quantified by a positive edge weight $\edgeweight_{\nodeidx,\nodeidx'}$ 
			(indicated by the thickness).}
	\end{center}
\end{figure} 

Ultimately, the similarity graph is a design choice for FL methods. This design choice might be guided by 
domain expertise: data generators being weather stations might be statistically similar if they are located 
nearby \cite{LocalizedLinReg2019}. Instead of domain expertise, we can also use established statistical 
tests to determine if two local datasets are obtained from a similar (identical) distribution \cite{Lee2023}. 

We can also obtain similarity measures for data generators via estimators for the divergence between 
probability distributions \cite{KLEst2018}. The edge weight $\edgeweight_{\nodeidx,\nodeidx'}$ 
can also be determined by a two-step procedure: (i) map each local dataset $\localdataset{\nodeidx}$ 
to a vector representation $\vz^{(\nodeidx)}$ and (ii) evaluate the Euclidean distance between the representations $\vz^{(\nodeidx)}$ and $\vz^{(\nodeidx')}$. 

Ideally, the connectivity of a similarity graph reflects the cluster structure of 
data generators: Nodes $\nodeidx \in \mathcal{C}$ in the same cluster (see 
Assumption \ref{asspt_clustering}) should be connected via many edges with large 
weight. On the other hand, there should only be few boundary (low-weight) 
edges that connect nodes in- and outside the cluster (see Figure \ref{fig_clustered_fl}). 

We measure the internal connectivity of a cluster via the second smallest 
eigenvalue $\eigval{2}(\mathcal{C})$ of the Laplacian matrix $\LapMat{\mathcal{C}}$ obtained 
for the induced sub-graph $\graph^{(\mathcal{C})}$.\footnote{The induced sub-graph 
	consists of the cluster nodes $\mathcal{C}$ and all edges $\edge{\nodeidx}{\nodeidx'} \in \edges$ of the 
	similarity graph $\graph$ with $\nodeidx, \nodeidx' \in \mathcal{C}$.}

The larger $\eigval{2}\big( \mathcal{C} \big)$, the better the connectivity among the 
nodes in $\mathcal{C}$. While $\eigval{2}\big( \mathcal{C} \big)$ describes the intrinsic 
connectivity of a cluster $\mathcal{C}$, we also need to characterize its connectivity 
with the other nodes in the similarity graph. To this end, we will use the cluster boundary 
\begin{equation}
	\label{equ_def_cluster_boundary}
	\bd[\mathcal{C}]\!\defeq\! \hspace*{-2mm}\sum_{\edge{\nodeidx}{\nodeidx'} \in  \partial \mathcal{C}} \hspace*{-4mm} \edgeweight_{\nodeidx,\nodeidx'} \mbox{, with } \partial \mathcal{C}\!\defeq\!\big\{ \edge{\nodeidx}{\nodeidx'}\!\in\!\edges: \nodeidx \in \mathcal{C}, \nodeidx'\!\notin\!\mathcal{C} \big\}. 
\end{equation}
For a single-node cluster $\mathcal{C} = \{ \nodeidx \}$, the cluster boundary 
coincides with the node degree, $\bd[\mathcal{C}]  = \sum_{\nodeidx' \neq \nodeidx} \edgeweight_{\nodeidx,\nodeidx'}$.

\begin{figure}[hbtp] 
	\begin{center} 
		\begin{tikzpicture}[auto,scale=0.6]
			
			\coordinate (i1) at (0,0);
			\coordinate (i2) at (-3,2);
			\coordinate (i3) at (-3,-2);
			
			\draw [dashed] (0.8cm,-0.2cm) arc [start angle=0, end angle=360, 
			x radius=3cm, 
			y radius=3.4cm]
			node [pos=0.5] {$\mathcal{C}$} 
			node [pos=.25] {} 
			node [pos=.5] {}  
			node [pos=.75] {};
			\coordinate (i4) at (3,0);
			\coordinate (i5) at (4,2);
			\draw [fill] (i1) circle [radius=0.2] node[below=5pt] {$\localparams{1}$};
			\draw [fill] (i2) circle [radius=0.2] node[below left = 5pt and 5pt of i2] {$\localparams{2}$};
			\draw [fill] (i3) circle [radius=0.2] node[below=5pt] {$\localparams{3}$};
			\foreach \nodeidx in {4,5}
			\draw [fill] (i\nodeidx) circle [radius=0.2] ; 
			\node[below right=2pt and 2pt of i4] {$\localparams{4}$};
			\node[above right=2pt and 2pt of i5] {$\localparams{5}$};
			\draw[line width=0.5mm] (i1) -- (i2);
			\draw[line width=0.5mm] (i2) -- (i3);
			\draw[line width=0.5mm] (i1) -- (i3);
			\draw[line width=0.5mm] (i1) -- node[midway,below]{$\partial \mathcal{C}$} (i4);
			\draw[line width=0.5mm] (i4) -- (i5);	
		\end{tikzpicture}
		\caption{	\label{fig_clustered_fl}  The similarity graph for a collection of data generators that include a cluster $\mathcal{C}$. 
			Ideally, a similarity graph contains many edges between nodes in $\mathcal{C}$ but only 
			few boundary edges (see \eqref{equ_def_cluster_boundary}) between nodes in- and outside $\mathcal{C}$.}
	\end{center} 

\end{figure}

\subsection{Generalized Total Variation Minimization} 
\label{sec_gtvmin} 

The goal of CFL is to train a local (or personalized) model $\localmodel{\nodeidx}$ for each 
data generator (or user) $\nodeidx$. Our focus is on local models that are parametrized by 
vectors $\localparams{\nodeidx} \in \mathbb{R}^{\dimlocalmodel}$, for $\nodeidx = 1,\ldots,\nrnodes$. 
The usefulness of a specific choice for the parameters $\localparams{\nodeidx}$ is measured by a 
local loss function $\locallossfunc{\nodeidx}{\localparams{\nodeidx}}$, for $\nodeidx = 1,\ldots,\nrnodes$.

In principle, we could learn $\localparams{\nodeidx}$ by minimizing $\locallossfunc{\nodeidx}{\localparams{\nodeidx}}$, i.e., 
implementing a separate empirical risk minimization for each node $\nodeidx \in \nodes$. 
However, this approach fails for a high-dimensional local model $\localmodel{\nodeidx}$ as 
they typically require much more training data than provided by the local dataset $\localdataset{\nodeidx}$. 

We can use the similarity graph to regularize the training of personalized models. In particular, we 
penalize local model parameters that result in a large total variation (TV): 
\begin{align} 
	\label{equ_def_tv}
 	\sum_{\edge{\nodeidx}{\nodeidx'} \in \edges} \edgeweight_{\nodeidx,\nodeidx'} \normgeneric{\weights^{(\nodeidx)}- \weights^{(\nodeidx')}}{2}^{2} & = \weights^{T} \big( \LapMat{\graph} \otimes \mI \big) \weights \nonumber \\ 
 	& \hspace*{-20mm} \mbox{ with } \weights \defeq \bigg( \big( \localparams{1}\big)^{T},\ldots, \big( \localparams{\nrnodes}\big)^{T} \bigg)^{T}.
\end{align} 
GTVMin balances between the average local loss (training errors) incurred by local model parameters and their 
TV \eqref{equ_def_tv}, 
\begin{align}
	\label{equ_def_gtvmin} 
	\big\{ \widehat{\weights}^{(\nodeidx)} \big\}_{\nodeidx=1}^{\nrnodes} & \in \argmin_{ \weights^{(\nodeidx)}} \bigg[ \sum_{\nodeidx \in \nodes} \locallossfunc{\nodeidx}{\weights^{(\nodeidx)}} + \nonumber \\ 
	& \hspace*{10mm}  \regparam \sum_{\edge{\nodeidx}{\nodeidx'} \in \edges} 
	\edgeweight_{\nodeidx,\nodeidx'}  \normgeneric{\weights^{(\nodeidx)}- \weights^{(\nodeidx')}}{2}^{2} \bigg] .
\end{align} 
The parameter $\regparam \geq 0$ in \eqref{equ_def_gtvmin} steers the preference for a small average local loss 
over a small TV. Choosing a large value for $\regparam$ results in solutions of \eqref{equ_def_gtvmin} to have 
a small TV (model parameters $\widehat{\weights}^{(\nodeidx)}$ vary little across 
edges $\edge{\nodeidx}{\nodeidx'} \in \edges$) even at the expense of a higher average local loss. 

If the similarity graph reflects the cluster structure of data generators (see Figure \ref{fig_clustered_fl}), 
GTVMin \eqref{equ_def_gtvmin} enforces the learnt parameter vectors $\big\{ \widehat{\weights}^{(\nodeidx)} \big\}_{\nodeidx=1}^{\nrnodes}$
to be approximately constant at cluster nodes (see Assumption \ref{asspt_clustering}). 
Note, however, that GTVMin \eqref{equ_def_gtvmin} does not require the knowledge about the 
actual clusters but only the similarity graph. 

We can interpret the (weighted edges of the) similarity graph as ``hints'' offered to GTVMin. 
If there are enough (and correct) hints, GTVMin recovers the actual cluster structure of data 
generators, i.e., the learnt model parameters \eqref{equ_def_gtvmin} are approximately identical 
for all nodes $\nodeidx$ in the same cluster $\cluster$. 

Our main result is an upper bound on deviation 
\begin{equation}
\label{equ_def_error_component_clustered}
\widetilde{\vw}^{(\nodeidx)} \defeq \estlocalparams{\nodeidx} - (1/|\mathcal{C}|)\sum_{\nodeidx' \in \mathcal{C}} \estlocalparams{\nodeidx'} \mbox{, for } \nodeidx \in \cluster,
\end{equation} 
between the learnt parameters $\widehat{\weights}^{(\nodeidx)}$ in the cluster $\cluster$ 
and their average. This upper bound will involve two key characteristics of a cluster 
$\cluster \subseteq \nodes$: the boundary \eqref{equ_def_cluster_boundary} and the second-smallest 
eigenvalue $\eigval{2}(\cluster)$ of the graph Laplacian $\LapMat{\cluster}$. This eigenvalue allows to lower bound the 
variation of local model parameters across $\mathcal{C}$, 
\begin{align}
	\label{equ_lower_bound_tv_eigval}
	\sum_{\substack{\nodeidx,\nodeidx' \in \mathcal{C} \\ \edge{\nodeidx}{\nodeidx'} \in \edges}} \hspace*{-2mm} \edgeweight_{\nodeidx,\nodeidx'}  \normgeneric{\weights^{(\nodeidx)}\!-\!\weights^{(\nodeidx')}}{2}^{2} & \!\geq\! \nonumber \\ 
	& \hspace*{-20mm} \eigval{2}(\mathcal{C})  \sum_{\nodeidx \in \cluster} \normgeneric{\weights^{(\nodeidx)}\!-\!{\rm avg}^{(\cluster)}\{\localparams{\nodeidx} \}}{2}^{2}.
\end{align}  
Here, $ {\rm avg}^{(\cluster)}\{\localparams{\nodeidx} \} \defeq (1/|\cluster|) \sum_{\nodeidx \in \cluster} \weights^{(\nodeidx)}$ 
is the average of the local model parameters of cluster nodes $\nodeidx \in \cluster$. The bound \eqref{equ_lower_bound_tv_eigval} 
can be verified via the Courant–Fischer–Weyl min-max characterization \cite[Thm. 8.1.2.]{GolubVanLoanBook} 
for the eigenvalues of the psd matrix $\LapMat{\mathcal{C}} \otimes \mI$. 

The RHS in \eqref{equ_lower_bound_tv_eigval} has a particular geometric interpretation: 
It is the squared Euclidean norm of the projection $\mathbf{P}_{\mathcal{S}^{\perp}} \weights^{(\cluster)}$ of 
the stacked model parameters ${\rm stack}^{(\cluster)} \big\{ \localparams{\nodeidx} \big\} \in \mathbb{R}^{\dimlocalmodel\cdot |\cluster|}$ 
on the orthogonal complement $\mathcal{S}^{\perp}$ of the subspace 
\begin{equation}
	\label{equ_def_subspace_S}
	\mathcal{S} \defeq  \bigg\{ \big( \vc^{T},\ldots,\vc^{T} \big)^{T} \mbox{ for some } \vc \in \mathbb{R}^{\dimlocalmodel} \bigg\} \subseteq \mathbb{R}^{\dimlocalmodel \cdot |\cluster|}. 
\end{equation} 

The subspace \eqref{equ_def_subspace_S} can also be used to decompose the 
estimation error $\Delta \weights^{(\nodeidx)} \defeq \estlocalparams{\nodeidx} - \overline{\weights}^{(\cluster)}$ 
of GTVMin \eqref{equ_def_gtvmin}. Indeed, by stacking the estimation error into a vector 
$\Delta \weights = {\rm stack}^{(\cluster)} \big\{ \Delta \weights^{(\nodeidx)}  \big\}$, 
we have the orthogonal decomposition
\begin{equation}
	\label{equ_def_orth_decomp_error}
\Delta \weights = 	\mP_{\mathcal{S}} \Delta \weights + 	\mP_{\mathcal{S}^{\perp}} \Delta \weights. 
\end{equation} 
We can evaluate the components in \eqref{equ_def_orth_decomp_error} as 
\begin{equation} 
	\label{equ_component_in_S}
	\mP_{\mathcal{S}} \Delta \weights  = {\rm stack}^{(\cluster)} \big\{ {\rm avg}^{(\cluster)}\{ \estlocalparams{\nodeidx} \} - \overline{\weights}^{(\cluster)} \big\}, 
\end{equation} 
and 
\begin{equation} 
		\label{equ_component_in_S_perp}
	\mP_{\mathcal{S}^{\perp}} \Delta \weights  = {\rm stack}^{(\cluster)} \big\{ \estlocalparams{\nodeidx} - {\rm avg}^{(\cluster)}\{ \estlocalparams{\nodeidx} \} \big\}.
\end{equation}

\section{Main Result}
\label{sec_main_result} 

Intuitively, we expect GTVMin \eqref{equ_def_gtvmin} to deliver (approximately) identical 
model parameters $\localparams{\nodeidx}$ for any cluster $\mathcal{C}$ that contains 
many internal edges but only few boundary edges. Using $\eigval{2}\big( \mL^{(\mathcal{C})} \big)$ 
as a measure for internal connectivity of $\cluster$ and the boundary measure $\bd[\mathcal{C}]$ (see \eqref{equ_def_cluster_boundary}) 
we can make this intuition precise. 
\begin{theorem}
\label{prop_upper_bound_clustered}
Consider a similarity graph $\graph$ whose nodes $\nodeidx \in \nodes$ represent data generators and 
correspond model parameters $\localparams{\nodeidx}$. 
We learn model parameters $\estlocalparams{\nodeidx}$, for each node $\nodeidx \in \nodes$, 
via solving GTVMin \eqref{equ_def_gtvmin}. If there is a cluster $\mathcal{C} \subseteq \nodes$ 
satisfying Assumption \ref{asspt_clustering},
\begin{align} 
\label{equ_upper_bound_err_component_cluster}
		\sum_{\nodeidx \in \mathcal{C}} \normgeneric{\widetilde{\vw}^{(\nodeidx)} }{2}^{2} & \!\leq\!\frac{1}{\regparam \eigval{2}\big( \mL^{(\mathcal{C})} \big)}  \bigg[\varepsilon^{(\mathcal{C})} 
 \!+\! \regparam \bd[\mathcal{C}] 2 \bigg( \normgeneric{\overline{\weights}^{(\mathcal{C})}}{2}^{2}\!+\!R^{2} \bigg)\bigg]. 
	\end{align} 
	Here, $R$ denotes an upper bound on the Euclidean norm $\normgeneric{\estlocalparams{\nodeidx}}{2}$ outside the cluster, i.e., 
	$\max_{\nodeidx \in \nodes \setminus \mathcal{C}} \normgeneric{\estlocalparams{\nodeidx}}{2} \leq R$. 
\end{theorem}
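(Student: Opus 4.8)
\emph{Plan.} The strategy is to combine the spectral lower bound \eqref{equ_lower_bound_tv_eigval} with a comparison (optimality) argument for the GTVMin solution. First I would instantiate \eqref{equ_lower_bound_tv_eigval} at the learnt parameters $\estlocalparams{\nodeidx}$. Since, by definition \eqref{equ_def_error_component_clustered}, $\widetilde{\vw}^{(\nodeidx)} = \estlocalparams{\nodeidx} - {\rm avg}^{(\mathcal{C})}\{\estlocalparams{\nodeidx}\}$, this directly converts the quantity of interest into a lower bound for the \emph{internal} total variation of the cluster,
\[
\eigval{2}(\mathcal{C}) \sum_{\nodeidx \in \mathcal{C}} \normgeneric{\widetilde{\vw}^{(\nodeidx)}}{2}^{2} \leq \sum_{\substack{\nodeidx,\nodeidx' \in \mathcal{C} \\ \edge{\nodeidx}{\nodeidx'} \in \edges}} \edgeweight_{\nodeidx,\nodeidx'} \normgeneric{\estlocalparams{\nodeidx} - \estlocalparams{\nodeidx'}}{2}^{2}.
\]
It therefore suffices to show that $\regparam$ times this internal total variation is at most $\varepsilon^{(\mathcal{C})} + \regparam \bd[\mathcal{C}] \, 2\big(\normgeneric{\overline{\weights}^{(\mathcal{C})}}{2}^{2} + R^{2}\big)$; multiplying the display by $\regparam$ and dividing by $\regparam \eigval{2}(\mathcal{C})$ then yields \eqref{equ_upper_bound_err_component_cluster}.

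To bound the internal total variation I would exploit that $\{\estlocalparams{\nodeidx}\}$ minimizes the GTVMin objective \eqref{equ_def_gtvmin} and compare its objective value against the feasible competitor $\localparams{\nodeidx} = \overline{\weights}^{(\mathcal{C})}$ for $\nodeidx \in \mathcal{C}$ and $\localparams{\nodeidx} = \estlocalparams{\nodeidx}$ for $\nodeidx \notin \mathcal{C}$. For this competitor the internal total variation vanishes (all cluster nodes carry the same $\overline{\weights}^{(\mathcal{C})}$), the cluster-local losses sum to at most $\varepsilon^{(\mathcal{C})}$ by Assumption \ref{asspt_clustering}, whereas the losses $\locallossfunc{\nodeidx}{\estlocalparams{\nodeidx}}$ at nodes outside $\mathcal{C}$ and the total-variation contributions of edges with \emph{both} endpoints outside $\mathcal{C}$ are identical for the competitor and the optimal solution, hence cancel.

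After cancellation, only the cluster-local losses and the edges incident to the boundary $\partial \mathcal{C}$ survive. Discarding the nonnegative terms $\locallossfunc{\nodeidx}{\estlocalparams{\nodeidx}}\geq 0$ and the nonnegative boundary total variation of the optimal solution from the left-hand side (which only strengthens the inequality), the optimality relation reduces to
\[
\regparam \sum_{\substack{\nodeidx,\nodeidx' \in \mathcal{C} \\ \edge{\nodeidx}{\nodeidx'} \in \edges}} \edgeweight_{\nodeidx,\nodeidx'} \normgeneric{\estlocalparams{\nodeidx} - \estlocalparams{\nodeidx'}}{2}^{2} \leq \varepsilon^{(\mathcal{C})} + \regparam \sum_{\edge{\nodeidx}{\nodeidx'} \in \partial \mathcal{C}} \edgeweight_{\nodeidx,\nodeidx'} \normgeneric{\overline{\weights}^{(\mathcal{C})} - \estlocalparams{\nodeidx'}}{2}^{2}.
\]
Each boundary summand is then controlled by $\normgeneric{\overline{\weights}^{(\mathcal{C})} - \estlocalparams{\nodeidx'}}{2}^{2} \leq 2\normgeneric{\overline{\weights}^{(\mathcal{C})}}{2}^{2} + 2R^{2}$, using the elementary inequality $\normgeneric{\va-\vb}{2}^{2}\leq 2\normgeneric{\va}{2}^{2}+2\normgeneric{\vb}{2}^{2}$ together with the outside-cluster bound $\normgeneric{\estlocalparams{\nodeidx'}}{2}\leq R$ for $\nodeidx' \in \nodes \setminus \mathcal{C}$; summing the edge weights over $\partial \mathcal{C}$ produces the factor $\bd[\mathcal{C}]$ from \eqref{equ_def_cluster_boundary}.

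The routine parts are the cancellation bookkeeping and the $2\normgeneric{\va}{2}^{2}+2\normgeneric{\vb}{2}^{2}$ estimate. The crux of the argument, and the step requiring the most care, is the \emph{choice of competitor}: it must coincide with $\estlocalparams{\cdot}$ on $\nodes \setminus \mathcal{C}$ so that every external loss and every external-edge total-variation term cancels exactly, leaving behind only the vanishing internal total variation, the cluster loss (bounded by $\varepsilon^{(\mathcal{C})}$), and the boundary edges. Correctly tracking which terms cancel and which nonnegative terms may be safely dropped is where the content of the proof lies.
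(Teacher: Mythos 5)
Your proposal is correct and follows essentially the same route as the paper's proof: the same competitor (equal to $\overline{\weights}^{(\mathcal{C})}$ on $\mathcal{C}$ and to $\estlocalparams{\nodeidx}$ off $\mathcal{C}$), the same cancellation of terms not involving cluster nodes, the same use of $\normgeneric{\vu+\vv}{2}^{2}\leq 2\normgeneric{\vu}{2}^{2}+2\normgeneric{\vv}{2}^{2}$ with the radius $R$ to produce the $\bd[\mathcal{C}]$ factor, and the same spectral bound \eqref{equ_lower_bound_tv_eigval} applied to $\widetilde{\vw}^{(\nodeidx)}$. The only difference is presentational: you state the optimality comparison as a direct inequality chain, whereas the paper packages it as a proof by contradiction.
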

\begin{proof}
See Section \ref{sec_proof_main_result}. 
\end{proof} 
Note that Theorem \ref{prop_upper_bound_clustered} applies to any choice for the non-negative local loss 
functions $\locallossfunc{\nodeidx}{\cdot}$, for $\nodeidx=1,\ldots,\nrnodes$. In particular, the bound \eqref{equ_upper_bound_err_component_cluster} 
applies to any instance of GTVMin as long as the clustering Assumption \ref{asspt_clustering} holds. 

The usefulness of the upper bound \eqref{equ_upper_bound_err_component_cluster} depends on 
the availability of a tight bound $R$ on the norm of learnt model parameters outside the cluster $\cluster$. 
Such an upper bound can be found trivially, if the loss functions $\locallossfunc{\nodeidx}{\cdot}$ in \eqref{equ_def_gtvmin}
include an implicit constraint of the form $\normgeneric{\localparams{\nodeidx}}{2} \leq R$. 

We hasten to add that the bound \eqref{equ_upper_bound_err_component_cluster} only controls the 
deviation \eqref{equ_def_error_component_clustered} of the learnt model parameters $\estlocalparams{\nodeidx}$ 
from their cluster-wise average. This deviation coincides with the component \eqref{equ_component_in_S_perp} 
of the error $\estlocalparams{\nodeidx} - \overline{\weights}^{(\nodeidx)}$. The bound \eqref{equ_upper_bound_err_component_cluster} 
does not tell us anything about the other error component \eqref{equ_component_in_S}. 

Theorem \ref{prop_upper_bound_clustered} covers single-model FL \cite{pmlr-v54-mcmahan17a,SmithCoCoA} as the 
extreme case where all nodes belong to a single cluster $\mathcal{C} = \nodes$. 
Trivially, the cluster boundary then vanishes and the bound \eqref{equ_upper_bound_err_component_cluster} 
specializes to 
\begin{equation} 
	\nonumber 
\sum_{\nodeidx \in \mathcal{C}} \normgeneric{\widetilde{\vw}^{(\nodeidx)} }{2}^{2} \!\leq\!\frac{\varepsilon^{(\mathcal{C})} }{\regparam \eigval{2}\big( \mL^{(\mathcal{C})} \big)}. 
\end{equation} 
Thus, for the single-model setting (where $\cluster = \nodes$) the error component \eqref{equ_def_error_component_clustered} 
can be made arbitrarily small by choosing the GTVMin parameter $\regparam$ sufficiently large. 



\section{Proof of Theorem \ref{prop_upper_bound_clustered}} 
\label{sec_proof_main_result} 

We verify \eqref{equ_upper_bound_err_component_cluster} via a proof by contradiction, i.e., 
we show that if \eqref{equ_upper_bound_err_component_cluster} would not hold, then 
$\estlocalparams{\nodeidx}$ cannot be a solution to \eqref{equ_def_gtvmin}. To this end, 
we decompose the objective function in GTVMin \eqref{equ_def_gtvmin} as follows: 
\begin{align}
	\label{equ_def_f_prime_two_prime}
	f(\weights) & = \nonumber \\ 
	& \hspace*{-10mm} \underbrace{\sum_{\nodeidx \in \mathcal{C}} \locallossfunc{\nodeidx}{\localparams{\nodeidx}}\!+\!\regparam  \sum_{\substack{\edge{\nodeidx}{\nodeidx'} \in \edges\\ \nodeidx \in \mathcal{C}}} 
		\edgeweight_{\nodeidx,\nodeidx'}  \normgeneric{\weights^{(\nodeidx)} \!-\!\weights^{(\nodeidx')}}{2}^{2}}_{=: f'\big( \weights \big)} \nonumber \\ 
	& + f''\big( \weights \big). 
\end{align} 
Here, we used the stacked local model parameter $\weights = {\rm stack} \big\{ \localparams{\nodeidx} \big\}_{\nodeidx=1}^{\nrnodes} \in \mathbb{R}^{\dimlocalmodel \cdot \nrnodes}$. Note that only the first component $f'$ in \eqref{equ_def_f_prime_two_prime} depends on the 
local model parameters $\localparams{\nodeidx}$ at the cluster nodes $\nodeidx \in \cluster$.

Let us introduce the shorthand $f'\big( \weights^{(\nodeidx)} \big)$ for the function obtained from $f'(\weights)$ 
for varying $\localparams{\nodeidx}$, $\nodeidx \in \mathcal{C}$, but fixing $\localparams{\nodeidx} \defeq \estlocalparams{\nodeidx}$ 
for $\nodeidx \notin \mathcal{C}$. We verify the bound \eqref{equ_upper_bound_err_component_cluster} 
by showing that if it does not hold, the local model parameters $\overline{\weights}^{(\nodeidx)} \defeq \overline{\weights}^{(\mathcal{C})}$, 
for $\nodeidx \in \mathcal{C}$, results in a smaller value $f'\big( \overline{\weights}^{(\nodeidx)} \big) < f'\big( \widehat{\weights}^{(\nodeidx)} \big)$ 
than the choice $\estlocalparams{\nodeidx}$, for $\nodeidx \in \mathcal{C}$. This would contradict the 
fact that $\widehat{\weights}^{(\nodeidx)}$ is a solution to \eqref{equ_def_gtvmin}. 

Then, note that 
\begin{align}
	f'\big( \overline{\weights}^{(\nodeidx)} \big)  &= 	 \sum_{\nodeidx \in \mathcal{C}} \locallossfunc{\nodeidx}{\overline{\weights}^{(\nodeidx)}}  \nonumber \\ 
	& \hspace*{-10mm}+\hspace*{-2mm} \sum_{\substack{\edge{\nodeidx}{\nodeidx'} \in \edges \\ \nodeidx, \nodeidx' \in \mathcal{C}}} 	\hspace*{-2mm}
 \regparam	\edgeweight_{\nodeidx,\nodeidx'}  \normgeneric{\overline{\weights}^{(\mathcal{C})} \!-\!\overline{\weights}^{(\mathcal{C})}}{2}^{2}+ \hspace*{-2mm} \sum_{\substack{\edge{\nodeidx}{\nodeidx'} \in \edges \\ \nodeidx \in \mathcal{C}, \nodeidx' \notin \mathcal{C}}}
	\hspace*{-2mm}	\regparam\edgeweight_{\nodeidx,\nodeidx'}  \normgeneric{\overline{\weights}^{(\mathcal{C})} \!-\!\widehat{\weights}^{(\nodeidx')}}{2}^{2}  \nonumber \\
	& \stackrel{\eqref{equ_def_cluster_wise_opt}}{\leq}
	\varepsilon^{(\mathcal{C})} + \regparam \sum_{\substack{\edge{\nodeidx}{\nodeidx'} \in \edges \\ \nodeidx \in \mathcal{C}, \nodeidx' \notin \mathcal{C}}}
	\edgeweight_{\nodeidx,\nodeidx'}  \normgeneric{\overline{\weights}^{(\mathcal{C})} \!-\!\widehat{\weights}^{(\nodeidx')}}{2}^{2} \nonumber \\
	&\stackrel{(a)}{\leq} \varepsilon^{(\mathcal{C})} 
	+ \regparam \sum_{\substack{\edge{\nodeidx}{\nodeidx'} \in \edges \\ \nodeidx \in \mathcal{C}, \nodeidx' \notin \mathcal{C}}}
	\edgeweight_{\nodeidx,\nodeidx'} 2 \bigg( \normgeneric{\overline{\weights}^{(\mathcal{C})}}{2}^{2}+ \normgeneric{\widehat{\weights}^{(\nodeidx')}}{2}^{2} \bigg)  \nonumber \\ 
	& \leq  \varepsilon^{(\mathcal{C})}
	+ \regparam \bd[\mathcal{C}] 2 \bigg( \normgeneric{\overline{\weights}^{(\mathcal{C})}}{2}^{2}\!+\!R^{2} \bigg).  \label{equ_norm_error_func_value_clustered}
\end{align} 
Step $(a)$ uses the inequality $\normgeneric{\vu\!+\!\vv}{2}^{2} \leq 2\big(\normgeneric{\vu}{2}^{2}\!+\!\normgeneric{\vv}{2}^{2}\big)$ which 
is valid for any two vectors $\vu,\vv \in \mathbb{R}^{\dimlocalmodel}$. 

On the other hand, 
\begin{align}
	\label{equ_obj_function_gtvmin_average_ac_component_clustered}
	f' \big( \widehat{\weights}^{(\nodeidx)} \big) & \geq \regparam \sum_{\nodeidx,\nodeidx' \in \mathcal{C}} 
	\edgeweight_{\nodeidx,\nodeidx'}  \underbrace{\normgeneric{\widehat{\weights}^{(\nodeidx)} \!-\!\widehat{\weights}^{(\nodeidx')}}{2}^{2}}_{\stackrel{\eqref{equ_def_error_component_clustered}}{=}\normgeneric{\widetilde{\weights}^{(\nodeidx)} \!-\!\widetilde{\weights}^{(\nodeidx')}}{2}^{2}}  \nonumber \\ 
	& \stackrel{\eqref{equ_lower_bound_tv_eigval}}{\geq} \regparam \eigval{2}\big( \mL^{(\mathcal{C})} \big) \sum_{\nodeidx\in \mathcal{C}}\normgeneric{\widetilde{\weights}^{(\nodeidx)}}{2}^{2}.
\end{align}
If the bound \eqref{equ_upper_bound_err_component_cluster} would not hold, then 
by \eqref{equ_obj_function_gtvmin_average_ac_component_clustered} and \eqref{equ_norm_error_func_value_clustered} 
we would obtain $f' \big( \widehat{\weights}^{(\nodeidx)} \big) > f' \big( \overline{\weights}^{(\nodeidx)} \big)$, which 
contradicts the fact that $\widehat{\weights}^{(\nodeidx)}$ solves \eqref{equ_def_gtvmin}.  

\section{Acknowledgement} 
The authors is grateful for funding received from the Research Council of Finland (decision nr.\ 331197, 331197) 
and European Union (grant nr.\ 952410). Feedback received from Pedro Nardelli and Xu Yang is acknowledged 
warmly. 

\bibliographystyle{IEEEtran}
\bibliography{../Literature.bib}

\end{document}